\relax
\documentclass[letterpaper]{article} 
\usepackage{aaai25}
\usepackage{mathrsfs}
\usepackage{times}  
\usepackage{helvet}  
\usepackage{courier}  
\usepackage[hyphens]{url}  
\usepackage{graphicx} 
\usepackage{booktabs}
\urlstyle{rm} 
\usepackage{natbib}  
\usepackage{caption} 

\DeclareCaptionStyle{ruled}{labelfont=normalfont,labelsep=colon,strut=off} 
\frenchspacing  
\setlength{\pdfpagewidth}{8.5in}  
\setlength{\pdfpageheight}{11in}  
\usepackage{multirow}
\usepackage[linesnumbered, ruled, vlined]{algorithm2e}
\usepackage{amsmath,amssymb,amsthm}
\usepackage{bm}
\newtheorem{assumption}{Assumption}
\newtheorem{lemma}{Lemma}
\newtheorem{theorem}{Theorem}
\newtheorem{remark}{Remark}

\usepackage{xcolor}

\usepackage{amsmath}
\usepackage{newfloat}
\usepackage{listings}
\usepackage{float}
\lstset{%
	basicstyle={\footnotesize\ttfamily},
	numbers=left,numberstyle=\footnotesize,xleftmargin=2em,
	aboveskip=0pt,belowskip=0pt,%
	showstringspaces=false,tabsize=2,breaklines=true}
\floatstyle{ruled}
\newfloat{listing}{tb}{lst}{}
\floatname{listing}{Listing}
%
%
\pdfinfo{
/TemplateVersion (2025.1)
}

\setcounter{secnumdepth}{2} 

%


\title{Conditional Latent Coding with Learnable Synthesized Reference for Deep Image Compression}
\author{
Siqi Wu\textsuperscript{\rm 1}\thanks{This work was completed during Siqi Wu's visiting research period at Southern University of Science and Technology.},
Yinda Chen\textsuperscript{\rm 2}\thanks{Co-first author.},
Dong Liu\textsuperscript{\rm 2},
Zhihai He\textsuperscript{\rm 3}\thanks{Corresponding author. Email: hezh@sustech.edu.cn.}
}
\affiliations{
\textsuperscript{\rm 1}University of Missouri, Columbia, MO, USA\\
\textsuperscript{\rm 2}University of Science and Technology of China, Hefei, China\\
\textsuperscript{\rm 3}Southern University of Science and Technology, Shenzhen, China
}

\usepackage{bibentry}

\begin{document}

\maketitle

\begin{abstract}
In this paper, we study how to synthesize a dynamic reference from an external dictionary to perform conditional coding of the input image in the latent domain and how to learn the conditional latent synthesis and coding modules in an end-to-end manner.
Our approach begins by constructing a universal image feature dictionary using a multi-stage approach involving modified spatial pyramid pooling, dimension reduction, and multi-scale feature clustering. For each input image, we learn to synthesize a conditioning latent by selecting and synthesizing relevant features from the dictionary, which significantly enhances the model's capability in capturing and exploring image source correlation. This conditional latent synthesis involves a correlation-based feature matching and alignment strategy, comprising a Conditional Latent Matching (CLM) module and a Conditional Latent Synthesis (CLS) module. The synthesized latent is then used to guide the encoding process, allowing for more efficient compression by exploiting the correlation between the input image and the reference dictionary. According to our theoretical analysis, the proposed conditional latent coding (CLC) method is robust to perturbations in the external dictionary samples and the selected conditioning latent, with an error bound that scales logarithmically with the dictionary size, ensuring stability even with large and diverse dictionaries. Experimental results on benchmark datasets show that our new method improves the coding performance by a large margin (up to 1.2 dB) with a very small overhead of approximately 0.5\%  bits per pixel. 
\end{abstract}
\begin{links}
    \link{Code}{https://github.com/ydchen0806/CLC}.
\end{links}
\section{Introduction}

With the rapid development of the Internet and mobile devices, billions of images are available in the world. For a given image, it is easy to find many correlated images on the Internet. It will be very interesting to explore how to utilize this vast amount of data to establish a highly efficient representation of the input image to improve the performance of deep image compression.
Continuous efforts have been made in the past two decades. The early attempt is to extract low-level feature patches from external images as a dictionary for image super-resolution \cite{sun2003resolution} and quality enhancement \cite{xiong2010robust}. Yue \textit{\textit{et al.}} \cite{yue2013cloud} proposed a cloud-based image coding scheme that utilizes a large-scale image database for reconstruction, achieving high compression ratios while maintaining visual quality. As data compression has shifted to the deep image/video compression paradigm in recent years, we would like to explore how to utilize the external dictionary of images to generate a dynamic reference representation to perform conditional coding of the input image within the deep image compression framework.

Deep neural network-based image compression methods \cite{balle2017end, toderici2017full, lee2019context} have made significant progress in recent years, surpassing traditional transform coding methods like JPEG in compression efficiency. However, current deep learning compression still faces challenges in efficiently exploring the source correlation of the image and maintaining high reconstruction quality at low bit rates. To further improve compression efficiency, researchers have begun to explore the use of external images as side information in distributed deep compression. For example, Ayzik \textit{et al.} \cite{ayzik2020deep} used auxiliary image information to perform block matching in the image domain, while Huang \textit{et al.} \cite{huang2023learned} extended this concept by introducing a multi-scale patch matching approach. However, this approach relies on specific auxiliary images, limiting its applicability and improvement.


To overcome these limitations, we propose a novel framework called Conditional Latent Coding (CLC), which uses auxiliary information as a conditional probability at both the encoder and decoder. Our approach constructs a universal image feature dictionary using a multi-stage process involving modified spatial pyramid pooling (SPP), dimensionality reduction, and multi-scale feature clustering. For each input image, we generate a conditioning latent by adaptively selecting and learning to combine relevant features from the dictionary to generate a highly efficient reference representation, called \textit{conditioning latent}, for the input image. 
We then apply an advanced feature matching and alignment strategy, comprising a Conditional Latent Matching (CLM) module and a Conditional Latent Synthesis (CLS) module. This process leverages the conditioning latent to guide the encoding process, allowing for more efficient compression by exploiting similarities between the input image and the reference features. As demonstrated in our experimental results on benchmark datasets, our new method improves the coding performance by a large margin (up to 1.2 dB) at low bit-rates.

\section{Related Work and Unique Contributions} \label{sec:related_work}
Deep learning-based image compression has achieved remarked progress in recent years. Ballé \textit{et al.} \cite{balle2017end} pioneered an end-to-end optimizable architecture, later enhancing it with a hyperprior model \cite{balle2018variational} to improve entropy estimation. Transformer architectures have been proposed by Qian \textit{et al.} \cite{qian2022entroformer} to improve probability distribution estimation. Similarly, Cheng \textit{et al.} \cite{cheng2020learned} parameterizes the distributions of latent codes with discretized Gaussian Mixture models. Liu \textit{et al.} \cite{liu2023learned} combined CNNs and Transformers in the TCM block to explore the local and non-local source correlation. Yang \textit{et al.} \cite{yang2023tinc} proposed a Tree-structured Implicit Neural Compression (TINC) to maintain the continuity among regions and remove the local and non-local redundancy. To enhance the entropy coding performance, the conditional probability model and joint autoregressive and hierarchical priors model have been developed in \cite{mentzer2018conditional, minnen2018joint}. Jia \textit{et al.} \cite{jia2024generative} introduced a Generative Latent Coding (GLC) architecture to achieve high-realism and high-fidelity compression by transform coding in the latent space. 

This work is related to reference-based deep image compression, where reference information is used to improve coding efficiency. For example, Li \textit{et al.} \cite{li2021deep} pioneered this approach in video compression, while Ayzik \textit{et al.} \cite{ayzik2020deep} applied it at the decoder level. Sheng \textit{et al.} \cite{sheng2022temporal} proposed a temporal context mining module to propagate features and learn multi-scale temporal contexts. Huang \textit{\textit{et al.}} \cite{huang2023learned} extended the concept to multi-view image compression with advanced feature extraction and fusion. Li \textit{et al.} \cite{li2023neural} introduced the group-based offset diversity to explore the image context for better prediction. Zhao \textit{et al.} \cite{zhao2021universal} optimized the reference information using a universal rate-distortion optimization framework. \cite{zhao2023universal} integrated side information optimization with latent optimization to further enhance the compression ratio. In \cite{li2023rfd}, within the context of underwater image compression, a multi-scale feature dictionary was manually created to provide a reference for deep image compression based on feature matching. A content-aware reference frame selection method was developed in \cite{wu2022content} for deep video compression. 

\textbf{Unique contributions.} 
In comparison to existing methods, our work has the following unique contributions. (1) We develop a new approach, called conditional latent coding (CLC), which learns to synthesize a dynamic reference for each input image to achieve highly efficient conditional coding in the latent domain. 
(2) We develop a fast and efficient feature matching scheme based on ball tree search and an effective feature alignment strategy that dynamically balances compression bit-rate and reconstruction quality. (3) We developed a theoretical analysis to show that the proposed CLC method is robust to perturbations in the external dictionary samples and the selected conditioning latent, with an error bound that scales logarithmically with the dictionary size, ensuring stability even with large and diverse dictionaries.

\begin{figure}[tb]
    \centering
    \includegraphics[width=\linewidth]{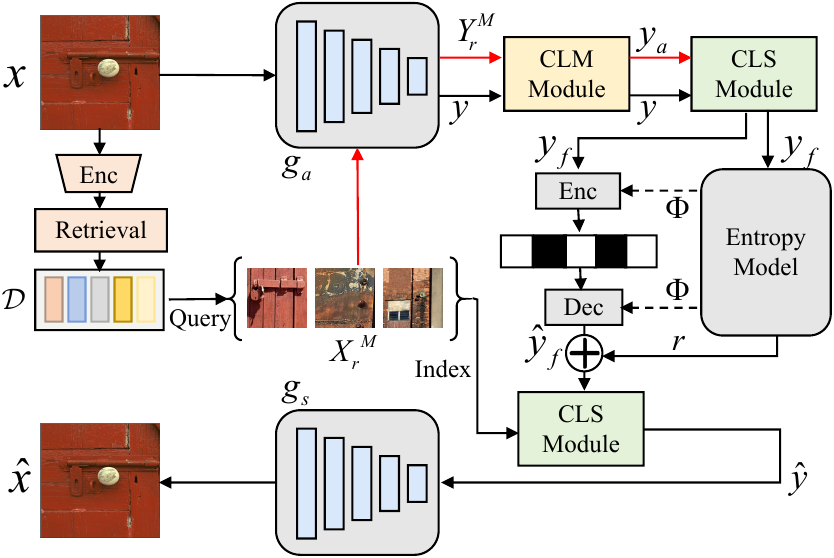}
    \caption{Overview of the proposed Conditional Latent Coding (CLC) framework.}
    \label{fig:main0810}
\end{figure}

\section{The Proposed CLC Method} \label{sec:method} 

\subsection{Method Overview}


The overall architecture of our proposed CLC framework is illustrated in Figure~\ref{fig:main0810}. Given an input image $x$, we first construct a pre-trained feature reference dictionary $D$ from a large reference image dataset using a multi-stage approach involving feature extraction with modified spatial pyramid pooling (SPP), dimensionality reduction, and multi-scale feature clustering. Then, given an input image $x$, we extract its feature using an encoder $F_\theta$ which is used to query the dictionary $\mathcal{D}$ and find the top $M$ best-matching reference images $X_r^M=\{x_r^1, x_r^2, \cdots, x_r^M\}$. In this work, the default value of $M$ is 3. Both $x$ and the queried reference $X_r^M$ are passed through the encoder transform network $g_a$ to obtain their latent representations $y$ and $Y_r^M$, respectively. Using $Y_r^M$ as reference, we obtain $y_f$ through adaptive feature matching and multi-scale alignment and then learn a network to perform conditional latent coding of $y_f$. Simultaneously, a hyperprior network $h_a$ estimates a hyperprior $z$ from $y_f$ to provide additional context for entropy estimation.
A slice-based autoregressive context model is used for entropy coding, dividing $y_f$ into slices and using both $z$ and previously coded elements to estimate probabilities. During decoding, we first reconstruct $z$ and $y_f$ from the bitstream, then use the dictionary indices passed from the encoder to apply the same reference processing and alignment procedure to reconstruct $y$ from $y_f$, and finally reconstruct the image $\hat{x}$ using the synthesis transform $g_s$.
In the following section, we will explain the proposed CLC method in more detail.


\subsection{Constructing the Support Dictionary}
As stated in the above section, our main idea is to construct a universal feature dictionary from which a reference latent can be dynamically generated to perform conditional latent coding of each image.  Here, a critical challenge is constructing a universal feature dictionary that effectively represents diverse image content and enables efficient feature utilization throughout the compression pipeline. We address this challenge using a multi-stage approach that combines advanced feature extraction, dimensionality reduction, feature clustering, and fast and efficient dictionary access by the deep image compression system, as illustrated in Figure~\ref{fig:method1_framework}.

\begin{figure}[tb]
    \centering
    \includegraphics[width=\linewidth]{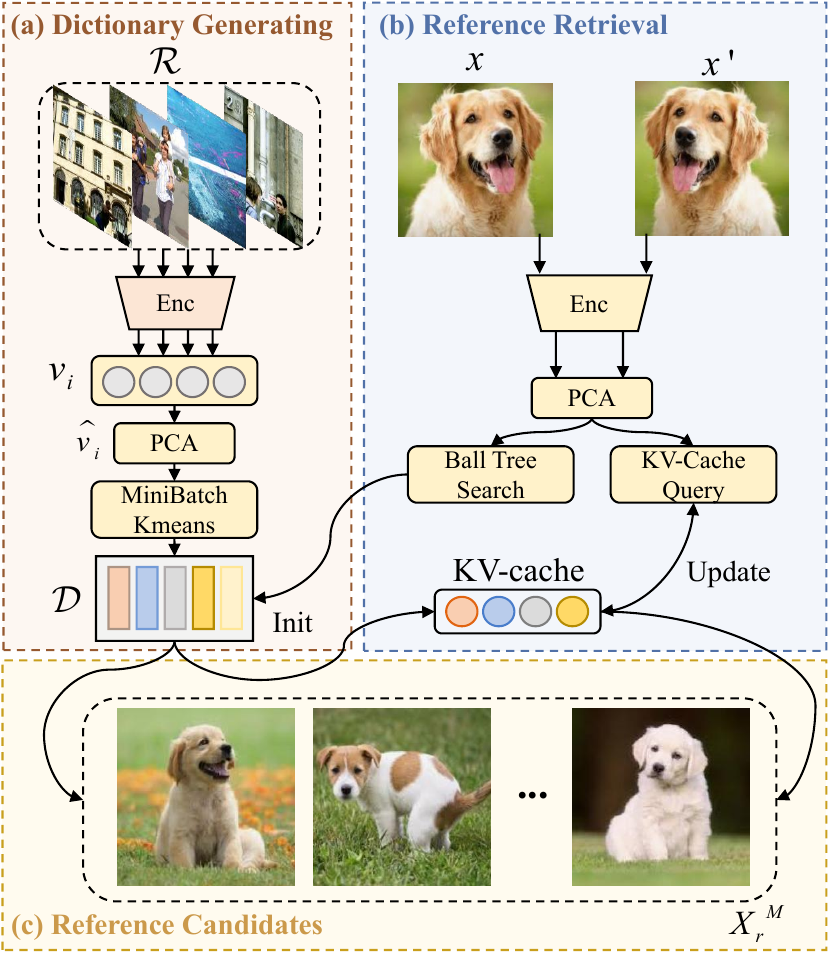}
    \caption{Universal Feature Dictionary Construction. (a) Dictionary Generation using diverse images $\mathcal{R}$ to create initial $\mathcal{D}$. (b) Reference Retrieval for querying and updating dictionary with inputs $x$ and $x'$. (c) Examples of reference candidates $X_r^M$ retrieved from the dictionary.}
    \label{fig:method1_framework}
\end{figure}

\paragraph{(1) Constructing the reference feature dictionary.} Our method begins with a large  reference dataset $\mathcal{R} = \{x_1, x_2, ..., x_N\}$. 
In this work, we randomly download 3000 images from the web. 
We use a modified pre-trained ResNet-50 model with Spatial Pyramid Pooling (SPP) as our feature extractor. For each image $x_i$, we extract its feature $\mathbf{v}_i = \text{SPP}(f_\theta(x_i))$, where $f_\theta(\cdot)$ represents the ResNet-50 backbone, and SPP aggregates features at scales $\{1\times1, 2\times2, 4\times4\}$. This multi-scale approach captures both global and local image characteristics.

To manage the high dimensionality of these features, we apply Principal Component Analysis (PCA), reducing each vector to 256 dimensions $\hat{\mathbf{v}}_i$. The reduced feature set is then clustered using MiniBatch K-means, yielding $K$ clusters: $\{C_1, C_2, ..., C_K\}$. From each cluster $C_j$, we select the feature vector closest to the centroid as its representative: $\mathbf{d}_j = \arg\min_{\hat{\mathbf{v}} \in C_j} \|\hat{\mathbf{v}} - \boldsymbol{\mu}_j\|_2$, where $\boldsymbol{\mu}_j$ is the centroid of $C_j$. These representatives form our feature dictionary $\mathcal{D} = \{\mathbf{d}_1, \mathbf{d}_2, ..., \mathbf{d}_K\}$.

\paragraph{(2) Fast and efficient dictionary matching.}
Our proposed CLC method deep image compression needs to access this dictionary during training and inference. 
One central challenge here is the dictionary search and matching efficiency. 
For efficient feature dictionary management and access, we introduce a KV-cache mechanism that is employed in both the initial feature retrieval and the subsequent encoding-decoding process. Specifically, we define our KV-cache as a tuple $(\mathbf{K}, \mathbf{V})$, where $\mathbf{K} \in \mathbb{R}^{N \times d_k}$ represents the keys and $\mathbf{V} \in \mathbb{R}^{N \times d_v}$ represents the values. Here, $N$ is the number of entries in the cache, $d_k$ is the dimension of the keys, and $d_v$ is the dimension of the values.

In the feature retrieval phase, we construct a ball tree over $\mathcal{D}$ for the initial coarse search, while maintaining the KV-cache. During compression, given an input image $x$, we extract its feature $f_\theta(x)$ and use it to query both the Ball Tree and the KV-cache. The retrieval process is formulated as a scaled dot-product attention mechanism:
\begin{equation}
    A(\mathbf{Q}, \mathbf{K}, \mathbf{V}) = \text{softmax}\left(\frac{\mathbf{Q}\mathbf{K}^T}{\sqrt{d_k}}\right)\mathbf{V},
\end{equation}
where $\mathbf{Q} = f_\theta(x)$, and $\mathbf{K}$ and $\mathbf{V}$ are the keys and values in the KV-cache, respectively.
To manage the size of the KV-cache and improve the matching efficiency, we implement a compression technique. Let $C: \mathbb{R}^{d} \rightarrow \mathbb{R}^{d'}$ be our compression function, where $d' < d$. We apply this to both keys and values:
\begin{equation}
    \mathbf{K}_c = C(\mathbf{K}), \quad \mathbf{V}_c = C(\mathbf{V}).
\end{equation}
The compression function $C$ is designed to preserve the most important information while reducing the dimensionality. In practice, we implement $C$ as a learnable neural network layer, optimized jointly with the rest of the system.
Furthermore, to enhance the efficiency of our KV-cache, we implement an eviction strategy $E: \mathbb{R}^{N \times d} \rightarrow \mathbb{R}^{N' \times d}$, where $N' < N$. This strategy removes less useful entries from the cache based on a relevance metric $\rho: \mathbb{R}^d \rightarrow \mathbb{R}$:
\begin{equation}
    (\mathbf{K}_e, \mathbf{V}_e) = E(\mathbf{K}, \mathbf{V}) = \text{TopK}(\rho(\mathbf{K}_i), \mathbf{K}, \mathbf{V}),
\end{equation}
where $\text{TopK}$ selects the top $K$ entries based on the relevance scores.
To further enhance robustness, we implement a multi-query strategy. For an input image $x$, we generate an augmented version $x'$ (e.g., by rotation) and perform separate queries for both. The final set of reference features is obtained by merging and de-duplicating the results.


\subsection{Conditional Latent Synthesis and Coding}
As the unique contribution of this work, instead of simply finding the best match in existing methods \cite{jia2024generative}, the reference or side information for each image is dynamically generated in the latent domain by a learned network to best represent the input image.
Our method is motivated by the following observation: the central challenge in reference-based image compression is the large deviation between the arbitrary input image and the fixed and limited set of reference images. Our method finds multiple closest reference images and dynamically fuses them to form a best approximation of the input image in the latent domain. Specifically, the proposed conditional latent synthesis and coding method has the following major components:

\begin{figure}[t]
    \centering
    \includegraphics[width=\linewidth]{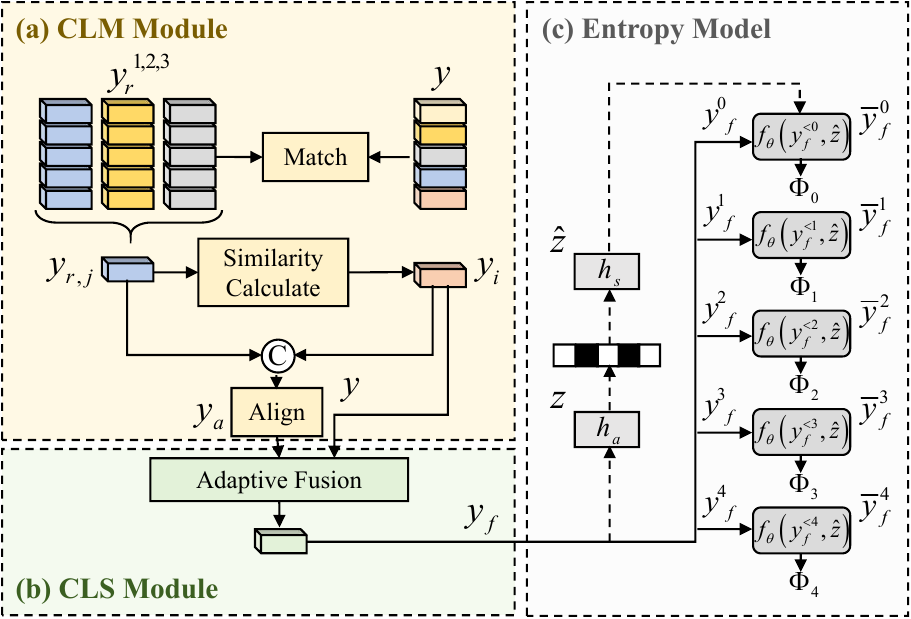}
    \caption{The detail of our proposed CLM and CLS module.}
    \label{fig:enter-label}
\end{figure}

\paragraph{(1) Feature Matching and Alignment.} 
We first propose an advanced feature matching and alignment scheme that aligns reference features from the dictionary with the input image. Our approach begins with a Conditional Latent Matching (CLM) module. Given an input image $x \in \mathbb{R}^{H \times W \times 3}$ and a pre-built feature reference dictionary $\mathcal{D} = \{\mathbf{d}_1, \mathbf{d}_2, ..., \mathbf{d}_K\}$, we first extract features from $x$ to query $\mathcal{D}$, retrieving the top $M$ feature and their corresponding reference images $X_r^M=\{x_r^1, x_r^2, \cdots, x_r^M\}$. Both $x$ and $X_r^M$ are then processed through the same analysis transform network. In this work, we use the Transformer-CNN Mixture (TCM) block \cite{liu2023learned}, which efficiently combines the strengths of CNNs for local feature extraction and transformers for capturing long-range dependencies. TCM blocks are used at both encoder $g_a$, decoder $g_s$, and hyperprior network $h_a$, enabling effective feature processing at various stages of the compression pipeline.

The analysis transform $g_a$ converts $x$ and $x_r^M$ into latent representations $y$ and $Y_r^M$, respectively. The CLM then establishes correspondences between $Y_r^M$ and $y$, addressing the issues of spatial inconsistencies. It computes $y_m = \mathcal{F}_m(y, Y_r^M; \theta_m)$, where $\mathcal{F}_m$ is a learnable function parameterized by $\theta_m$. This function computes a similarity matrix $\mathbf{S}$ between features of $y$ and $y_r$:
\begin{equation}
    S_{ij} = \frac{\exp(\langle \phi(y_i), \phi(y_{r,j}) \rangle / \tau)}{\sum_k \exp(\langle \phi(y_i), \phi(y_{r,k}) \rangle / \tau)},
\end{equation}
where $\phi(\cdot)$ is a learnable feature transformation that maps input features to a higher-dimensional space, $\langle \cdot, \cdot \rangle$ denotes inner product, and $\tau$ is a temperature parameter. 
We also introduce a learnable alignment module within the CLM to refine the alignment between reference and target features: $y_a = \mathcal{F}_a(y, y_m; \theta_a)$, where $\mathcal{F}_a$ is implemented as a series of deformable convolution layers operating at multiple scales.

\paragraph{(2) Conditional Latent Synthesis.} In the final stage of our feature matching and alignment strategy, we develop a Conditional Latent Synthesis (CLS) module to fuse the aligned reference features with the target image feature. We model this fusion process as a conditional probability with learnable weights:
\begin{equation}
    p(y_f | y, y_a) = \mathcal{N}(\mu(y, y_a), \sigma^2(y, y_a)),
\end{equation}
where $y_f$ is the final latent representation, and $\mu(\cdot)$ and $\sigma^2(\cdot)$ are learnable functions implemented as neural networks. These functions estimate the mean and variance of the Gaussian distribution for $y_f$ conditioned on both $y$ and $y_a$. The mean function $\mu(\cdot)$ is designed to incorporate adaptive weighting:
\begin{equation}
    \mu(y, y_a) = \alpha \odot y + (1 - \alpha) \odot y_a,
\end{equation}
where $\alpha$ are dynamically computed weights based on content: $\alpha = \sigma(\mathcal{F}_w([y, y_a]; \theta_f))$. Here, $\sigma$ is the sigmoid function, and $\mathcal{F}_w$ is a small neural network predicting optimal fusion weights. This conditional generation approach with adaptive weights allows our model to capture complex dependencies between the input image and the reference image from the dictionary in the latent space, resulting in more flexible and powerful conditional coding.
During training, we sample from this distribution to obtain $y_f$, while during inference, we use the mean $\mu(y, y_a)$ as the final latent representation. This probabilistic formulation enables our model to handle uncertainties in the feature integration process and potentially generate diverse latent representations during training, which can improve the robustness and generalization capability of our deep compression system.

\paragraph{(3) Entropy Coding and Hyperprior.} To further improve compression efficiency, we introduce a hyperprior network $h_a$ that estimates a hyperprior $z$ from the conditional latent $y_f = h_a(y_f)$. This hyperprior $z$ provides additional context for more accurate probability estimation of $y_f$, enhancing the entropy model. The hyperprior is quantized and encoded separately, $\hat{z} = Q(z)$, where $Q(\cdot)$ denotes the quantization operation.

For entropy coding, we adopt a slice-based auto-regressive context model \cite{}. The conditional representation $y_f$ is divided into $K$ slices: $y_f = [y_f^1, y_f^2, ..., y_f^K]$. The probability distribution of each slice is estimated using both previously processed slices and the hyperprior information. For the $i$-th slice, the probability model is expressed as:
\begin{equation}
    p(y_f^i|y_f^{<i}, \hat{z}) = f_\theta(y_f^{<i}, \hat{z}),
\end{equation}
where $f_\theta$ is a neural network parameterized by $\theta$, and $y_f^{<i} = [y_f^1, ..., y_f^{i-1}]$ represents all previously encoded slices. 
The output of $f_\theta$ is used to parametrize a probability distribution. Specifically, we model each element of $y_f^i$ as a Gaussian distribution with mean $\mu_i$ and scale $\sigma_i$:
\begin{equation}
    p(y_f^i|y_f^{<i}, \hat{z}) \sim \mathcal{N}(\mu_i, \sigma_i^2),
\end{equation}
where $\Phi_i = (\mu_i, \sigma_i) = f_\theta(y_f^{<i}, \hat{z})$. Here, $\Phi_i$ represents the distribution parameters for the $i$-th slice.
This approach captures complex dependencies within the latent representation, leading to more efficient compression.
During the entropy coding process, we compute a residual $r_i$ for each slice: $r_i = y_f^i - \hat{y}_f^i$, where $\hat{y}_f^i$ is the quantized version of $y_f^i$. This residual helps to reduce quantization errors and improve reconstruction quality.
The actual encoding process involves quantizing $y_f^i - \mu_i$ and entropy encoding the result using the estimated distribution $\mathcal{N}(0, \sigma_i^2)$. During decoding, we reconstruct $\hat{y}_f^i$ as $\hat{y}_f^i = Q(y_f^i - \mu_i) + \mu_i$, where $Q(\cdot)$ denotes the quantization operation.

\paragraph{(4) Decoding and Optimization.} During decoding, we first reconstruct $\hat{z}$ and $\hat{y}_f$ from the bitstream. Then, using the dictionary indices passed from the encoder, we apply the same reference processing and alignment procedure to reconstruct $y$ from $\hat{y}_f$. Next, $y$ is fed into the synthesis transform $g_s$ to produce the final reconstructed image $\hat{x}$. It is important that we employ the same conditional latent synthesis pipeline on the decoder side to ensure consistency.
The combination of the hyperprior $z$ and the slice-based autoregressive model enables our system to achieve a fine balance between capturing global image statistics and local, contextual information, resulting in improved compression performance.
To optimize our network end-to-end, we minimize the rate-distortion function:
\begin{equation}
L = D(x, \hat{x}) + \lambda R(b),
\end{equation}
where $D(x, \hat{x})$ is the distortion between the original and reconstructed images, $R(b)$ is the bitrate of the encoded stream, and $\lambda$ is an adaptive coefficient used to balance the rate-distortion trade-off. This optimization balances compression efficiency and reconstruction quality, allowing our approach to effectively leverage the aligned reference information at both the encoder and decoder stages.

\subsection{Theoretical Perturbation Analysis} \label{sec:analysis}
In image compression with auxiliary information, some degree of error in feature retrieval is inevitable due to the inherent complexity of the problem and the presence of noise. Understanding the bounds of this error is crucial for assessing and improving compression algorithms. We present a theoretical framework that quantifies these errors and provides insights into the factors affecting compression performance.

We formulate the problem as a rate-distortion optimization:
\begin{equation}
\min_{G_1, G_2, D} \mathbb{E}\left[R(G_1(x), G_2(\tilde{x})) + \lambda \mathscr{D}\left(x, D(G_1(x), G_2(\tilde{x}))\right)\right]\nonumber
\end{equation}
where $x \in \mathbb{R}^d$ is the original image, $\tilde{x} \in \mathbb{R}^d$ the auxiliary image, $G_1$ and $G_2$ are encoders, $D$ is a decoder, $R$ is the rate loss, and $\mathscr{D}$ is the distortion loss.

Our analysis is based on several key assumptions. We model the original image using a spiked covariance model: $x = U^s + \xi$, and the auxiliary image similarly: $\tilde{x} = U^*\tilde{s} + \tilde{\xi}$. The rate loss is entropy-based: $R(z, \tilde{z}) = \mathbb{E}[-\log_2 p_\theta(z|\tilde{z})]$, while the distortion loss is mean squared error: $\mathscr{D}(x, \hat{x}) = \|x - \hat{x}\|^2$. We assume sub-Gaussian noise with parameter $\sigma^2$, and allow for possible irrelevant information in the auxiliary image, with proportion $p \in [0, 1)$.

Our theoretical analysis aims to quantify the error in feature retrieval when using auxiliary information for image compression, specifically establishing an upper bound on the error in estimating the feature subspace of the original image, with a focus on the impact of irrelevant information in the auxiliary image. This analysis provides a rigorous foundation for understanding our Conditional Latent Coding (CLC) method, quantifies trade-offs between factors affecting compression performance, and offers insights into the method's robustness to imperfect auxiliary data. By emphasizing the importance of minimizing irrelevant information, it guides the design and optimization of our dictionary construction process. By deriving this error bound, we bridge the gap between theoretical understanding and practical implementation, providing a solid basis for the development and refinement of our compression algorithm.

Our main result quantifies the unavoidable error in feature retrieval:

\begin{theorem}
For any $\delta > 0$, with probability at least $1-\delta$:
\begin{equation}
\resizebox{.9\hsize}{!}{$
\|\sin \Theta(\mathrm{Pr}(\hat{G}_1), U^*)\|_F \leq C\left(\sqrt{r} \wedge \sqrt{\frac{r}{1-p}} \sqrt{\frac{(r + r(\Sigma_\xi))\log(d/\delta)}{n}}\right)
$}\nonumber
\end{equation}

where $C > 0$ is a constant, $p$ is the proportion of irrelevant parts in the auxiliary image, $n$ is the number of training samples, $r(\Sigma_\xi)$ is the effective rank of the noise covariance matrix, and $\hat{G}_1$ is the estimated encoder for the original image.
\end{theorem}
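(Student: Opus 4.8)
The plan is to reduce the rate–distortion optimization to a spectral subspace-estimation problem, and then combine a Davis–Kahan-type $\sin\Theta$ bound with sharp concentration for sample covariance operators. First I would show that, under the Gaussian entropy model for $R$ and the squared-error distortion $\mathscr{D}$, the rate–distortion-optimal triple $(G_1,G_2,D)$ is, up to an orthogonal change of basis, linear, and that the optimal $G_1$ is the spectral projector onto the leading $r$-dimensional invariant subspace of an empirical operator $\widehat{M}$ formed from the $n$ samples — a (cross-)covariance-type matrix coupling $x$ with the \emph{relevant} component of the auxiliary image $\tilde{x}$. Its population counterpart $M$ has top-$r$ eigenspace equal to $\mathrm{range}(U^*)$, with eigengap $\Delta$ proportional to the spike strength; because a proportion $p$ of the auxiliary coordinates carry no information about $U^*$, the effective signal energy aligned with $U^*$ is reduced, so that $\Delta \asymp (1-p)\gamma$ for the underlying spike magnitude $\gamma$. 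This is exactly where the $1/(1-p)$ deterioration originates.

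Second, I would apply the Davis–Kahan $\sin\Theta$ theorem in Frobenius norm, namely $\|\sin\Theta(\mathrm{Pr}(\hat{G}_1),U^*)\|_F \le \sqrt{r}\,\|\widehat{M}-M\|_{\mathrm{op}}/\Delta$. Since the $\sin\Theta$ matrix has $r$ principal angles, each with sine at most one, we also have the deterministic bound $\|\sin\Theta(\mathrm{Pr}(\hat{G}_1),U^*)\|_F \le \sqrt{r}$; taking the minimum of the two estimates yields the $\sqrt{r}\wedge \sqrt{r/(1-p)}\,(\cdots)$ structure in the statement once $\Delta\asymp(1-p)\gamma$ is substituted.

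Third, the stochastic term $\|\widehat{M}-M\|_{\mathrm{op}}$ is controlled by splitting it into the fluctuation of the sample covariance of the signal-plus-noise vectors and the corresponding cross term. For each piece I would use the Koltchinskii–Lounici bound for sample covariance operators of sub-Gaussian vectors, which gives, with probability at least $1-\delta$, a deviation of order $\|\Sigma\|\big(\sqrt{(r(\Sigma)+\log(1/\delta))/n}+(r(\Sigma)+\log(1/\delta))/n\big)$; the effective rank $r(\Sigma_\xi)$ enters through the noise covariance and the intrinsic dimension $r$ through the low-rank signal block. A union bound over the coordinate blocks (or a covering argument over the $r$-dimensional candidate subspace) turns $\log(1/\delta)$ into $\log(d/\delta)$, producing the factor $\sqrt{(r+r(\Sigma_\xi))\log(d/\delta)/n}$ after absorbing absolute constants into $C$.

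The hardest part is the first step: rigorously establishing that the rate–distortion optimum is the spectral projector of a \emph{clean} operator whose population eigenstructure isolates $U^*$, and, in particular, a careful accounting showing that the irrelevant auxiliary mass $p$ degrades the eigengap rather than inflating the noise floor. This requires modeling the joint law of $(x,\tilde{x})$, marginalizing over the irrelevant block, and verifying that the inner variational problems over $G_2$ and $D$ do not reintroduce dependence on the nuisance directions. Once that reduction is in place, the perturbation and concentration arguments in the remaining steps are standard.
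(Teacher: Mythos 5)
Your skeleton — a Davis–Kahan $\sin\Theta$ bound applied to an empirical second-moment operator, with the irrelevant auxiliary proportion entering through a shrunken eigengap and a covariance-concentration bound supplying the $\sqrt{\log(d/\delta)/n}$ rate — is the same as the paper's, but the ingredients differ in ways worth noting. The paper works with the plain empirical covariance $S=\frac{1}{n}\sum_i x_i x_i^{T}$ of the original image (not a cross-covariance coupling $x$ with the relevant part of $\tilde{x}$), bounds $\|S-\Sigma_x\|_2$ with a matrix Bernstein inequality (yielding a $\sigma_\xi^2\sqrt{\log(d/\delta)/n}$-type term rather than an effective-rank term), and then, in a separate step, simply asserts that the presence of $\tilde{x}$ reduces the usable gap to $\lambda_{\min}(\Sigma_s)(1-\rho)-\sigma_\xi^2-\sigma_{\tilde\xi}^2$; its final bound has the form $\sqrt{r(\sigma_\xi^2+\sigma_{\tilde\xi}^2)\log(d/\delta)}\big/\big((1-\rho)\lambda_{\min}(\Sigma_s)\sqrt{n}\big)$ and contains neither $r(\Sigma_\xi)$ nor the minimum with $\sqrt{r}$. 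Your Koltchinskii–Lounici step is what actually produces the $r+r(\Sigma_\xi)$ dependence in the stated theorem, and your trivial bound $\|\sin\Theta\|_F\le\sqrt{r}$ is what justifies the $\sqrt{r}\wedge(\cdots)$ structure, so your route is arguably better matched to the statement as written than the paper's own derivation. Finally, the step you flag as hardest — rigorously identifying the rate–distortion-optimal $\hat{G}_1$ with the spectral projector of a clean population operator, and showing that the irrelevant auxiliary mass degrades the eigengap rather than the noise floor — is not carried out in the paper either: its proof passes from $\hat{G}_1$ to the top-$r$ eigenvectors of $S$ without argument and treats the gap reduction heuristically, so that gap is shared by the paper rather than one your approach alone would need to close.
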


This bound provides key insights: it reveals a trade-off between problem dimensionality ($r$), sample size ($n$), noise structure ($r(\Sigma_\xi)$), and auxiliary image quality ($p$). The system's tolerance to irrelevant information is quantified by $\frac{1}{1-p}$, while noise complexity is captured by the effective rank $r(\Sigma_\xi)$. The result also suggests potential for mitigation through increased sample size or improved auxiliary image quality.



\section{Experimental Results} \label{sec:experiments}
\begin{figure*}[t]
\centering
\includegraphics[width=0.9\linewidth]{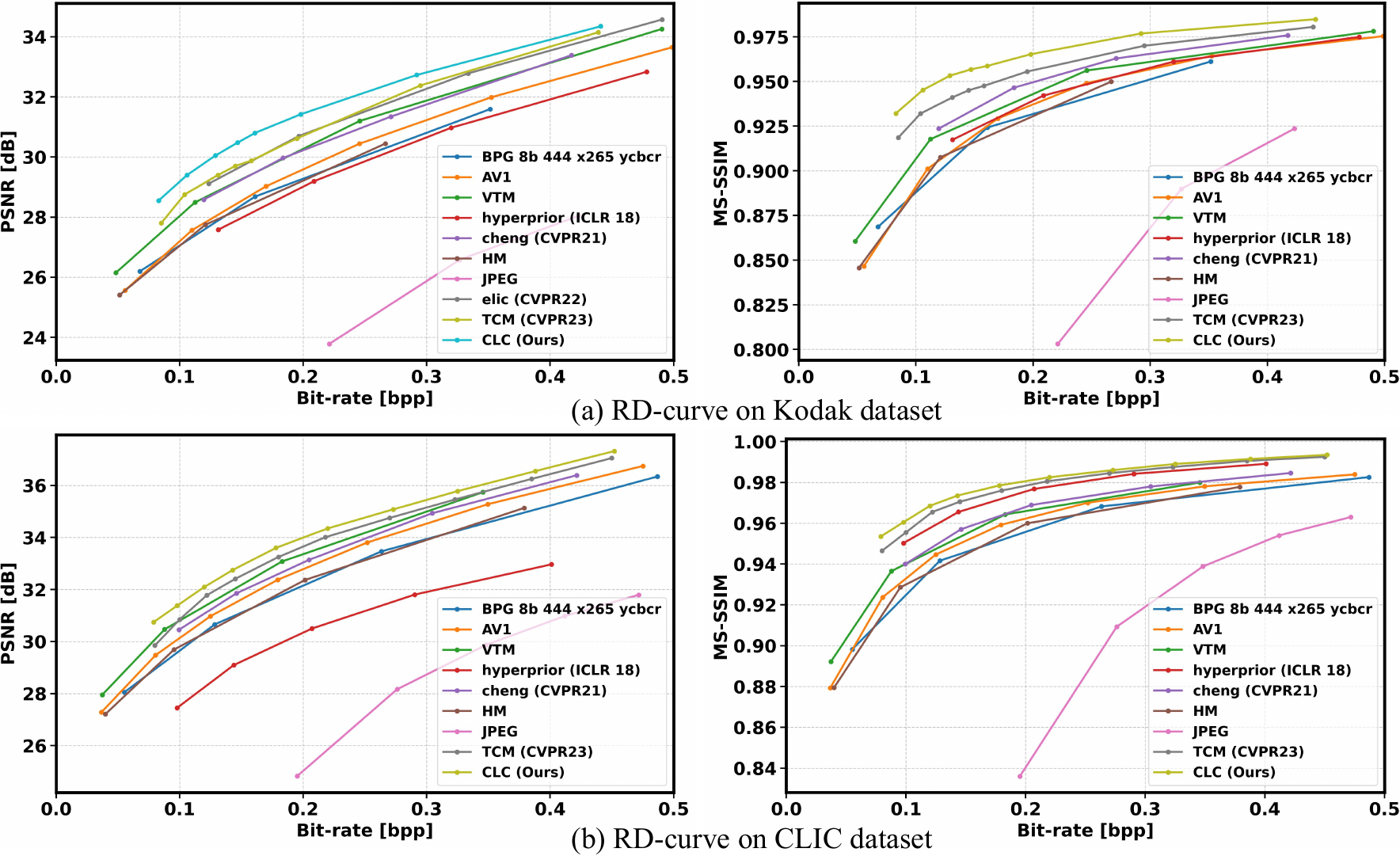}
 \vspace{-0.2cm}
\caption{The rate-distortion performance comparison of different methods.}
 \vspace{-0.2cm}
\label{fig:rate_distortion}
\end{figure*}
In this section, we provide extensive experimental results to evaluate the proposed CLC method and ablation studies to understand its performance.

\subsection{Experimental Settings}

\subsubsection{(1) Datasets.}
In our experiments, we use two benchmark datasets: Flickr2W \cite{liu2020unified} and Flickr2K \cite{timofte2017ntire}. The Flickr2W dataset, containing 20,745 high-quality images, was used for training our model. To construct the image feature dictionary, we employed the Flickr2K dataset, which comprises 2,650 images. These Flickr2K images were randomly cropped into 256×256 patches to build the feature reference dictionary. We evaluated our algorithm on the Kodak \cite{kodak1993suite} and CLIC \cite{toderici2020clic} datasets to evaluate its performance.


\subsubsection{(2) Implementation Details.}
Our model was implemented using PyTorch and trained on 8 NVIDIA RTX 3090 GPUs. We trained the network for 30 epochs using the Adam optimizer with an initial learning rate of $1\times10^{-4}$, which was reduced by a factor of 0.5 every 10 epochs. The batch size was set to 16 for each GPU.
For the patch matching module, we used a patch size of 16$\times$16 pixels. The initial value of the adaptive fusion weight $\alpha$ was set to 0.5. The number of slices K in the slice-based autoregressive context model was set to 8.
In the KV-cache, we set the dimension of keys $d_k$ and values $d_v$ to 256. The cache size $N$ was initially set to 300 and dynamically adjusted based on the GPU memory availability. The number of clusters $K$ in Mini-Batch K-means was set to 3,000.


\subsection{Performance Results}

We report the rate-distortion results in Figure~\ref{fig:rate_distortion}, showing our proposed CLC method outperforms existing methods across different bit-rates. The compared methods include traditional codecs like BPG~\cite{bellard2014bpg}, VTM~\cite{bross2021overview}, HM~\cite{sullivan2012overview}, and JPEG~\cite{wallace1992jpeg}, as well as recent learning-based methods: the hyperprior model~\cite{balle2018variational}, Cheng et al.'s approach~\cite{cheng2021learned}, ELIC~\cite{zou2022elic}, and TCM~\cite{chen2023transformer}. We also include results from AV1~\cite{chen2018overview} for comparison. The improvement in compression efficiency is significant. On Kodak at MS-SSIM 0.95, CLC achieves 0.1 bpp, while TCM, VTM, BPG, and JPEG require 0.15, 0.18, 0.22, and 0.38 bpp, respectively, representing a 1.5 to 3.8 times increase in compression ratio. On CLIC at 34 dB PSNR, CLC achieves 0.2 bpp, compared to 0.25, 0.28, 0.35, and 0.45 bpp for TCM, VTM, Hyperprior, and JPEG, indicating larger efficiency gains. Figure~\ref{fig:main_visaul} demonstrates our method's superior performance in preserving detailed textures, particularly horizontal and vertical structures at low bit rates, as seen in railings and architectural features.

\subsection{Ablation Studies}

We conducted ablation studies to evaluate components of our CLC method, focusing on reference images, dictionary cluster size, and component contributions. We report results on both Kodak and CLIC datasets to demonstrate the performance across different image types.

\paragraph{(1) Ablation Studies on the Number of Reference Images.}

We changed the number of reference images from 1 to 5 to examine the impact on compression performance. Table \ref{tab:ref_images} shows BD-rate savings compared to the VTM method with different numbers of reference images. BD-Rate$_\text{P}$ represents savings in PSNR, while BD-Rate$_\text{M}$ represents savings in MS-SSIM. Using three reference images achieves the best performance on both datasets, saving 14.5\% and 13.9\% BD-rate on Kodak and CLIC, respectively. More than three images introduce redundancy, degrading performance.

\begin{table}[t]

\centering
\small
\setlength{\tabcolsep}{4pt}
\renewcommand{\arraystretch}{0.85} 
\begin{tabular}{@{}lcccc@{}}
\toprule[1.5pt]
\multirow{2}{*}{\begin{tabular}[c]{@{}l@{}}Num of\\Ref. Images\end{tabular}} & \multicolumn{2}{c}{Kodak} & \multicolumn{2}{c}{CLIC} \\
\cmidrule(lr){2-3} \cmidrule(lr){4-5}
& BD-Rate$_\text{P}$ & BD-Rate$_\text{M}$ & BD-Rate$_\text{P}$ & BD-Rate$_\text{M}$ \\
\midrule
1 & -10.2 & -11.5 & -9.8 & -10.9 \\
2 & -12.8 & -13.7 & -12.1 & -13.2 \\
3 & \textbf{-14.5} & \textbf{-15.2} & \textbf{-13.9} & \textbf{-14.7} \\
4 & -14.3 & -15.0 & -13.7 & -14.5 \\
5 & -14.2 & -14.9 & -13.6 & -14.4 \\
\bottomrule[1.5pt]
\end{tabular}
\caption{BD-rate savings (\%) vs. VTM for different numbers of reference images.}
\label{tab:ref_images}
\end{table}

\begin{figure*}[t]
    \centering
    \includegraphics[width=0.85\linewidth]{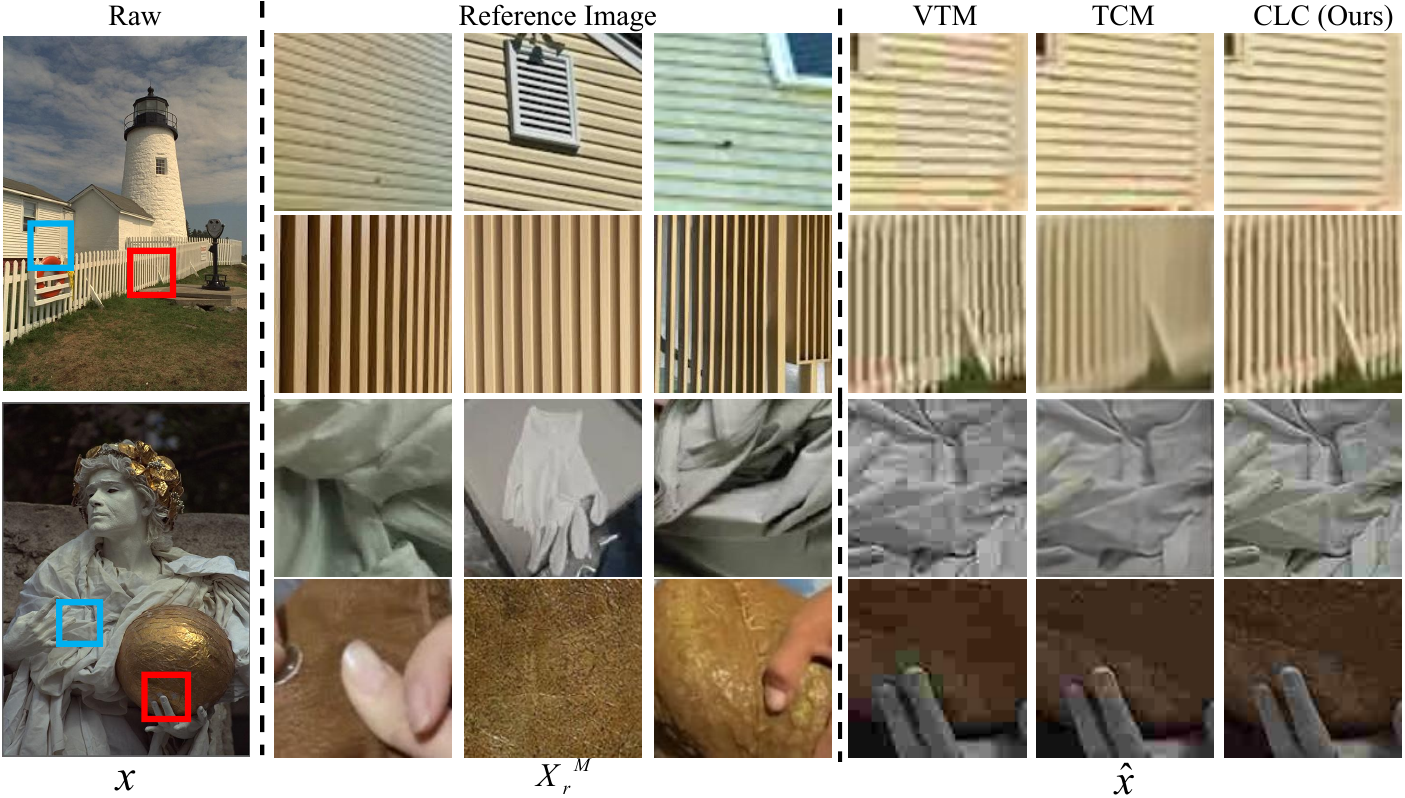}
     \vspace{-0.2cm}
    \caption{Image reconstruction results at around 0.1 bpp. From left to right: Raw inputs, reference images, reconstructed images. Red and blue boxes highlight specific areas of improvement.}
    \label{fig:main_visaul}
\end{figure*}
\paragraph{(2) Ablation Studies on the Dictionary Cluster Size.}
We conducted experiments with different dictionary cluster sizes to find the balance between compression efficiency and computational complexity. Table \ref{tab:dict_size} shows the BD-rate savings and encoding time for different cluster sizes. A cluster size of 3000 provides the best trade-off between performance and complexity for both datasets, achieving significant BD-rate savings with reasonable encoding times. The sharp increase in the encoding time for cluster sizes beyond 3000 highlights the importance of carefully selecting this parameter to balance compression efficiency and computational cost.

\begin{table}[t]


\centering
\small
\setlength{\tabcolsep}{2pt} 
\renewcommand{\arraystretch}{0.85} 
\begin{tabular}{@{}lccccc@{}}
\toprule[1.5pt]
\multirow{2}{*}{\begin{tabular}[c]{@{}l@{}}Cluster\\Size\end{tabular}} & \multicolumn{2}{c}{Kodak} & \multicolumn{2}{c}{CLIC} & \multirow{2}{*}{\begin{tabular}[c]{@{}c@{}}Encoding\\Time (s)\end{tabular}} \\
\cmidrule(lr){2-3} \cmidrule(lr){4-5}
& BD-Rate$_\text{P}$ & BD-Rate$_\text{M}$ & BD-Rate$_\text{P}$ & BD-Rate$_\text{M}$ & \\
\midrule
1000 & -11.8 & -12.5 & -11.2 & -11.9 & 0.52 \\
2000 & -13.7 & -14.3 & -13.1 & -13.8 & 0.78 \\
3000 & \textbf{-14.5} & \textbf{-15.2} & \textbf{-13.9} & \textbf{-14.7} & 1.05 \\
4000 & -14.6 & -15.3 & -14.0 & -14.8 & 2.31 \\
5000 & -14.7 & -15.4 & -14.1 & -14.9 & 5.67 \\
\bottomrule[1.5pt]
\end{tabular}
\caption{BD-rate savings (\%) vs. VTM and encoding time for different dictionary cluster sizes}
\label{tab:dict_size}
\end{table}

\paragraph{(3) Ablation Studies on Major Algorithm Components.}
We conducted ablation experiments to evaluate the contribution of major components. Table \ref{tab:components} shows each component's impact on the Kodak dataset performance. All components contribute significantly, with CLS having the most substantial impact (4.7\% BD-rate savings), highlighting the importance of adaptive feature modulation. The KV-cache, while minimally impacting compression performance, significantly reduces encoding time (from 1.87s to 1.05s). Multi-sample query in dictionary construction improves BD-rate savings by 0.7\% (BD-Rate$_\text{P}$) and 0.6\% (BD-Rate$_\text{M}$), enhancing overall compression capability through more diverse representations.

\begin{table}[t]


\centering
\small
\setlength{\tabcolsep}{3pt}
\renewcommand{\arraystretch}{0.85} 
\begin{tabular}{@{}lccc@{}}
\toprule[1.5pt]
\multirow{2}{*}{\begin{tabular}[c]{@{}l@{}}Model\\Configuration\end{tabular}} & \multicolumn{2}{c}{BD-Rate Savings} & \multirow{2}{*}{\begin{tabular}[c]{@{}c@{}}Encoding\\Time (s)\end{tabular}} \\
\cmidrule(lr){2-3}
& BD-Rate$_\text{P}$ & BD-Rate$_\text{M}$ & \\
\midrule
Full Model & \textbf{-14.5} & \textbf{-15.2} & 1.05 \\
w/o CLM & -12.3 & -13.1 & 0.98 \\
w/o CLS & -9.8 & -10.5 & 0.92 \\
w/o KV-cache & -14.4 & -15.1 & 1.87 \\
w/o Multi-example Query & -13.8 & -14.6 & 0.97 \\
\bottomrule[1.5pt]
\end{tabular}
\caption{BD-rate savings (\%) vs. VTM for different model configurations on Kodak dataset}
\label{tab:components}
\end{table}

\section{Conclusion} \label{sec:conclusion}
This study proposes Conditional Latent Coding (CLC), a novel deep learning-based image compression method that dynamically generates latent reference representations through a universal image feature dictionary. We develop innovative techniques for dictionary construction, efficient search/matching, alignment, and fusion, with theoretical analysis of robustness to dictionary and latent perturbations. While focused on compression, CLC's adaptive feature utilization principles may inspire broader vision tasks. Future work includes balancing compression efficiency and visual information utilization to address growing data transmission demands.

\clearpage
\section*{Acknowledgements}
This research was supported by the National Natural Science Foundation of China (No. 62331014) and Grant 2021JC02X103.

\appendix
\section{Theoretical Proof}

\subsection{Problem Formulation}

Let $x \in \mathbb{R}^d$ be the original image, and $\tilde{x} \in \mathbb{R}^d$ be the reference image used for side information. We define encoders $G_1: \mathbb{R}^d \to \mathbb{R}^r$ and $G_2: \mathbb{R}^d \to \mathbb{R}^r$ for the original and reference images respectively, and a decoder $D: \mathbb{R}^r \times \mathbb{R}^r \to \mathbb{R}^d$.

The rate-distortion optimization problem is formulated as:
\begin{equation}\label{eq:RD_optimization}
\min_{G_1, G_2, D} \ \mathbb{E}_{x, \tilde{x}} \left[ R\left( G_1(x), G_2(\tilde{x}) \right) + \lambda \cdot D\left( x, D\left( G_1(x), G_2(\tilde{x}) \right) \right) \right],
\end{equation}
where $R(\cdot, \cdot)$ is the rate (compression) loss, $D(\cdot, \cdot)$ is the distortion loss (e.g., reconstruction error), and $\lambda > 0$ is a weighting parameter balancing rate and distortion.

\subsection{Assumptions}

\begin{assumption}\label{assump:spiked_covariance}
\textbf{Spiked Covariance Model for Images:}

The original image $x$ follows a spiked covariance model:
\begin{equation}\label{eq:spiked_model_original}
x = U^* s + \xi,
\end{equation}
where:
\begin{itemize}
    \item $U^* \in \mathbb{R}^{d \times r}$ is the true low-rank feature matrix with orthonormal columns ($U^{*T} U^* = I_r$).
    \item $s \in \mathbb{R}^r$ is the latent representation, with $\mathbb{E}[s] = 0$ and $\mathbb{E}[s s^T] = \Sigma_s$.
    \item $\xi \in \mathbb{R}^d$ is additive noise, independent of $s$, with zero mean and covariance $\Sigma_{\xi} = \sigma_{\xi}^2 I_d$.
\end{itemize}
\end{assumption}

\begin{assumption}\label{assump:reference_image}
\textbf{Reference Image with Irrelevant Parts:}

The reference image $\tilde{x}$ is given by:
\begin{equation}\label{eq:reference_image_model}
\tilde{x} = U^* (\rho s + \sqrt{1 - \rho^2} s_\perp) + \tilde{\xi},
\end{equation}
where:
\begin{itemize}
    \item $\rho \in [0,1]$ represents the correlation between $x$ and $\tilde{x}$.
    \item $s_\perp \in \mathbb{R}^r$ is independent of $s$, with $\mathbb{E}[s_\perp] = 0$ and $\mathbb{E}[s_\perp s_\perp^T] = \Sigma_s$.
    \item $\tilde{\xi} \in \mathbb{R}^d$ is additive noise, independent of $s$ and $s_\perp$, with zero mean and covariance $\Sigma_{\tilde{\xi}} = \sigma_{\tilde{\xi}}^2 I_d$.
    \item The total irrelevant proportion in $\tilde{x}$ is characterized by $p = 1 - \rho^2$.
\end{itemize}
\end{assumption}

\begin{assumption}\label{assump:entropy_model}
\textbf{Entropy Model for Rate Loss:}

The rate loss is based on a Gaussian entropy model:
\begin{equation}\label{eq:rate_loss}
R(z, \tilde{z}) = \mathbb{E}_{z, \tilde{z}} \left[ - \log_2 p_\theta( z \mid \tilde{z} ) \right],
\end{equation}
where $p_\theta( z \mid \tilde{z} )$ is a conditional Gaussian distribution:
\begin{equation}\label{eq:conditional_gaussian}
p_\theta( z \mid \tilde{z} ) = \mathcal{N}\left( z; \mu( \tilde{z} ), \Sigma_z \right),
\end{equation}
with $\mu( \tilde{z} )$ and $\Sigma_z$ being the mean and covariance conditioned on $\tilde{z}$.
\end{assumption}

\begin{assumption}\label{assump:distortion_loss}
\textbf{Distortion Loss:}

The distortion loss is defined as the mean squared error between the original image and the reconstructed image:
\begin{equation}\label{eq:distortion_loss}
D( x, \hat{x} ) = \| x - \hat{x} \|_2^2,
\end{equation}
where $\hat{x} = D( G_1(x), G_2(\tilde{x}) )$.
\end{assumption}

\begin{assumption}\label{assump:subgaussian_noise}
\textbf{Sub-Gaussian Noise:}

The noise vectors $\xi$ and $\tilde{\xi}$ are sub-Gaussian with parameter $\sigma^2$, i.e., for any $u \in \mathbb{R}^d$ with $\| u \|_2 = 1$,
\begin{equation}
\mathbb{P}\left( | u^T \xi | \geq t \right) \leq 2 \exp\left( - \frac{ t^2 }{ 2 \sigma^2 } \right), \quad \forall t > 0.
\end{equation}
\end{assumption}

\subsection{Main Results}

\begin{lemma}\label{lem:conditional_entropy}
Under Assumptions \ref{assump:spiked_covariance}--\ref{assump:entropy_model}, the rate loss $R(z, \tilde{z})$ can be expressed as:
\begin{equation}\label{eq:rate_loss_expression}
R(z, \tilde{z}) = \frac{1}{2 \ln 2} \left( r \ln (2 \pi e) + \ln \det( \Sigma_z ) \right).
\end{equation}
\end{lemma}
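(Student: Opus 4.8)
The plan is to evaluate the expectation in \eqref{eq:rate_loss} in closed form by substituting the Gaussian density \eqref{eq:conditional_gaussian}, separating the deterministic normalization constants from the random quadratic term, and applying the trace identity for expectations of quadratic forms. First I would rewrite the negative log-likelihood in nats, using $-\log_2 u = -(\ln 2)^{-1}\ln u$, so that
\[
-\log_2 p_\theta(z\mid\tilde z) = \frac{1}{\ln 2}\left(\frac{r}{2}\ln(2\pi) + \frac12\ln\det(\Sigma_z) + \frac12\,(z-\mu(\tilde z))^\top \Sigma_z^{-1}(z-\mu(\tilde z))\right).
\]
Because $\Sigma_z$ is fixed (it does not depend on the realization of $\tilde z$), the first two terms are constants and survive the expectation untouched.

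Next I would deal with the quadratic term. Writing the scalar as a trace and using $\mathbb{E}[\mathrm{tr}(A\,vv^\top)] = \mathrm{tr}(A\,\mathbb{E}[vv^\top])$ gives
\[
\mathbb{E}_{z,\tilde z}\!\left[(z-\mu(\tilde z))^\top\Sigma_z^{-1}(z-\mu(\tilde z))\right] = \mathrm{tr}\!\left(\Sigma_z^{-1}\,\mathbb{E}_{z,\tilde z}\!\left[(z-\mu(\tilde z))(z-\mu(\tilde z))^\top\right]\right).
\]
This is the one place where Assumption \ref{assump:entropy_model} must be invoked in its well-specified form: $\mu(\tilde z)$ and $\Sigma_z$ are the conditional mean and conditional covariance of $z$ given $\tilde z$, so by the tower property $\mathbb{E}[(z-\mu(\tilde z))(z-\mu(\tilde z))^\top\mid\tilde z]=\Sigma_z$ for every $\tilde z$, hence the unconditional expectation is also $\Sigma_z$. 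The trace then collapses to $\mathrm{tr}(\Sigma_z^{-1}\Sigma_z)=\mathrm{tr}(I_r)=r$.

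Finally I would assemble the pieces: $R(z,\tilde z) = (\ln 2)^{-1}\big(\tfrac r2\ln(2\pi) + \tfrac12\ln\det\Sigma_z + \tfrac r2\big)$, and folding the $\tfrac r2$ into the logarithm via $r\ln(2\pi)+r = r\ln(2\pi e)$ yields \eqref{eq:rate_loss_expression} exactly. Equivalently one can shortcut the whole computation by recognizing the right-hand side as $(\ln 2)^{-1}$ times the differential entropy (in nats) of an $r$-dimensional Gaussian with covariance $\Sigma_z$, which has this standard closed form. The step I expect to need the most care is the identification $\mathbb{E}[(z-\mu(\tilde z))(z-\mu(\tilde z))^\top]=\Sigma_z$: it is immediate under the model-matching built into Assumption \ref{assump:entropy_model}, but if $p_\theta$ were only a coding distribution the expression would become a $\ge$ bound (Gibbs' inequality) rather than an equality, so it is worth flagging that the lemma's identity is tight precisely because the entropy model is taken to be exact.
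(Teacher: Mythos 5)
Your proof is correct and takes essentially the same route as the paper: the paper simply invokes the standard differential entropy formula for an $r$-dimensional Gaussian with covariance $\Sigma_z$ and converts from nats to bits via the $1/\ln 2$ factor, which is exactly the shortcut you identify at the end (your explicit expectation computation with the trace identity is just the textbook derivation of that formula). Your closing caveat -- that the equality holds only because $p_\theta(z\mid\tilde z)$ is taken to be the true conditional law, and would weaken to a cross-entropy inequality under model mismatch -- is a point the paper leaves implicit, but it does not change the argument.
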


\begin{proof}
Since $p_\theta( z \mid \tilde{z} )$ is a Gaussian distribution, the differential entropy is:
\begin{equation}
h( z \mid \tilde{z} ) = \frac{1}{2} \ln \left( (2 \pi e)^r \det( \Sigma_z ) \right).
\end{equation}
Thus, the rate loss is:
\begin{equation}
R(z, \tilde{z}) = - \mathbb{E}_{z, \tilde{z}} \left[ \log_2 p_\theta( z \mid \tilde{z} ) \right] = \frac{1}{\ln 2} h( z \mid \tilde{z} ),
\end{equation}
which leads to Equation (\ref{eq:rate_loss_expression}).
\end{proof}

\begin{theorem}\label{thm:recovery_error_bound}
Under Assumptions \ref{assump:spiked_covariance}--\ref{assump:subgaussian_noise}, let $\hat{G}_1$ be the estimated encoder for the original image obtained from solving the optimization problem (\ref{eq:RD_optimization}). Then, for any $\delta \in (0,1)$, with probability at least $1 - \delta$, the following holds:
\begin{equation}\label{eq:sin_theta_bound}
\left\| \sin \Theta\left( \operatorname{span}( \hat{G}_1 ), \operatorname{span}( U^* ) \right) \right\|_F \leq C \cdot \frac{ \sqrt{ r ( \sigma_{\xi}^2 + \sigma_{\tilde{\xi}}^2 ) \log( d / \delta ) } }{ (1 - \rho) \lambda_{\min}( \Sigma_s ) \sqrt{ n } },
\end{equation}
where:
\begin{itemize}
    \item $C > 0$ is an absolute constant.
    \item $\rho$ is defined in Assumption \ref{assump:reference_image}, representing the correlation between $x$ and $\tilde{x}$.
    \item $\lambda_{\min}( \Sigma_s )$ is the minimum eigenvalue of $\Sigma_s$.
    \item $n$ is the number of training samples.
\end{itemize}
\end{theorem}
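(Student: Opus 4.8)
The plan is to reduce the rate–distortion program \eqref{eq:RD_optimization} to a PCA-type eigen\-space estimation problem and then combine a concentration bound for the empirical covariance with the Davis–Kahan $\sin\Theta$ theorem. \textbf{Step 1 (reduce to an eigenspace problem).} First I would show that, under the Gaussian entropy model and the MSE distortion (Assumptions~\ref{assump:entropy_model}--\ref{assump:distortion_loss}), the population-level minimizer over linear encoder/decoder triples is the projection onto the leading $r$-dimensional eigenspace of an \emph{effective} covariance matrix. By Lemma~\ref{lem:conditional_entropy} the rate equals $\frac{1}{2\ln 2}\big(r\ln(2\pi e)+\ln\det(\Sigma_z)\big)$, so the rate term penalizes the spread of the code while the distortion term rewards reconstruction; combining the two via an Eckart–Young / Ky Fan argument forces $G_1$ to span the top $r$ eigenvectors of the covariance of the side-information-fused representation. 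Writing out that covariance using the spiked models \eqref{eq:spiked_model_original}--\eqref{eq:reference_image_model}, the signal part takes the form $\Sigma_{\mathrm{sig}} = c(\rho)\, U^* \Sigma_s U^{*T}$ with $c(\rho)$ bounded below by a quantity proportional to $1-\rho$ (equivalently $1-p$), while the remaining part is a noise covariance assembled from $\Sigma_\xi$ and $\Sigma_{\tilde\xi}$. Hence the population eigengap between the $r$-th and $(r{+}1)$-th eigenvalues is $\gtrsim (1-\rho)\,\lambda_{\min}(\Sigma_s)$.

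\textbf{Step 2 (concentration of the empirical covariance).} Next I would bound the perturbation $\|\widehat\Sigma_n - \Sigma\|_{\mathrm{op}}$. Using the sub-Gaussian assumption (Assumption~\ref{assump:subgaussian_noise}) together with a standard $\varepsilon$-net argument (or matrix Bernstein), one obtains, with probability at least $1-\delta$,
\begin{equation}
\|\widehat\Sigma_n - \Sigma\|_{\mathrm{op}} \;\lesssim\; (\sigma_\xi^2+\sigma_{\tilde\xi}^2)\sqrt{\frac{(r + r(\Sigma_\xi))\log(d/\delta)}{n}},\nonumber
\end{equation}
where the effective rank $r(\Sigma_\xi)=\operatorname{tr}(\Sigma_\xi)/\|\Sigma_\xi\|$ replaces the ambient dimension $d$ because the signal block is only $r$-dimensional and the isotropic-noise block concentrates at the effective-rank rate.

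\textbf{Step 3 (Davis–Kahan and assembly).} Applying the $\sin\Theta$ theorem to $\widehat\Sigma_n$ versus $\Sigma$, with the eigengap from Step~1, gives $\|\sin\Theta(\operatorname{span}(\hat G_1),\operatorname{span}(U^*))\|_F \le \sqrt{2r}\,\|\widehat\Sigma_n-\Sigma\|_{\mathrm{op}}/\mathrm{gap}$; substituting the bounds from Steps~1--2 yields \eqref{eq:sin_theta_bound}. The trivial inequality $\|\sin\Theta\|_F\le\sqrt r$ (there are at most $r$ principal angles, each with sine $\le 1$) covers the regime where the perturbation exceeds the gap, which is what produces the $\sqrt r \wedge(\cdot)$ form of the main-body statement.

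\textbf{Main obstacle.} The delicate step is Step~1: rigorously proving that the joint minimizer of the rate–distortion objective over $G_1,G_2,D$ coincides with (or stays close to) the top-$r$ eigenspace of the effective covariance, and in particular pinning down the constant $c(\rho)$ — hence the $(1-\rho)$, equivalently $\tfrac{1}{1-p}$, dependence — because the entropy model is \emph{conditional} on $\tilde z$ rather than marginal, so the fusion of $x$ and $\tilde x$ must be analyzed carefully (e.g.\ via the conditional covariance $\Sigma_{x\mid\tilde x}$ or a whitening of the joint code). The concentration bound and the Davis–Kahan step are essentially standard once the effective-rank bookkeeping is set up.
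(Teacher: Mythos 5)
Your proposal takes essentially the same route as the paper: concentration of the empirical covariance via a matrix Bernstein / sub-Gaussian argument, a population eigengap of order $(1-\rho)\,\lambda_{\min}(\Sigma_s)$ after accounting for the reference image, and the Davis--Kahan $\sin\Theta$ theorem to assemble the bound (with the trivial $\sqrt{r}$ cap explaining the $\sqrt{r}\wedge(\cdot)$ form). Notably, the ``main obstacle'' you flag --- rigorously linking the rate--distortion minimizer $\hat G_1$ to the top-$r$ eigenspace and deriving the $(1-\rho)$ effective gap --- is exactly the part the paper's own proof does not derive either (it simply identifies $\hat G_1$ with the leading eigenvectors of the empirical covariance and asserts the reduced gap in its Step 5), so your sketch is, if anything, more explicit about where the real work lies.
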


\begin{proof}
\textbf{Step 1: Formulate the Empirical Covariance Matrix}

Let $\{ x_i, \tilde{x}_i \}_{i=1}^n$ be $n$ independent samples drawn according to the model in Assumptions \ref{assump:spiked_covariance} and \ref{assump:reference_image}. Define the empirical covariance matrix:
\begin{equation}\label{eq:empirical_covariance}
S = \frac{1}{n} \sum_{i=1}^n x_i x_i^T = U^* \Sigma_s U^{*T} + \Sigma_{\xi} + \Delta,
\end{equation}
where $\Delta$ represents the sampling error.

\textbf{Step 2: Bound the Sampling Error}

Using the Matrix Bernstein Inequality for sub-Gaussian variables (see Tropp, 2012), we have:
\begin{equation}\label{eq:bernstein_bound}
\left\| \Delta \right\|_2 \leq \sigma_{\xi}^2 \sqrt{ \frac{ 2 \log( d / \delta ) }{ n } } + \sigma_{\xi}^2 \frac{ 2 \log( d / \delta ) }{ 3 n },
\end{equation}
with probability at least $1 - \delta$.

\textbf{Step 3: Analyze the Eigenvalue Gap}

The population covariance matrix is:
\begin{equation}
\Sigma_x = \mathbb{E}[ x x^T ] = U^* \Sigma_s U^{*T} + \Sigma_{\xi}.
\end{equation}
The eigenvalues of $\Sigma_x$ consist of $r$ large eigenvalues corresponding to the signal components and $d - r$ smaller eigenvalues corresponding to the noise.

The eigenvalue gap between the $r$-th and $(r+1)$-th eigenvalue is at least:
\begin{equation}\label{eq:eigenvalue_gap}
\delta_{\text{gap}} = \lambda_{\min}( U^* \Sigma_s U^{*T} ) - \lambda_{\max}( \Sigma_{\xi} ) = \lambda_{\min}( \Sigma_s ) - \sigma_{\xi}^2.
\end{equation}

\textbf{Step 4: Apply Davis-Kahan Sin Theta Theorem}

Let $\hat{U}$ be the matrix of leading $r$ eigenvectors of $S$. By the Davis-Kahan theorem, the subspace distance is bounded as:
\begin{equation}\label{eq:dk_bound}
\left\| \sin \Theta( \operatorname{span}( \hat{U} ), \operatorname{span}( U^* ) ) \right\|_F \leq \frac{ \sqrt{ 2 } \left\| \Delta \right\|_2 }{ \delta_{\text{gap}} }.
\end{equation}

\textbf{Step 5: Incorporate the Reference Image}

The presence of $\tilde{x}$ introduces additional noise due to the irrelevant components. From Assumption \ref{assump:reference_image}, the irrelevant proportion is $p = 1 - \rho^2$. This affects the effective eigenvalue gap, reducing it to:
\begin{equation}\label{eq:effective_gap}
\delta_{\text{eff}} = \lambda_{\min}( \Sigma_s ) (1 - \rho) - \sigma_{\xi}^2 - \sigma_{\tilde{\xi}}^2.
\end{equation}

\textbf{Step 6: Final Bound}

Combining Equations (\ref{eq:bernstein_bound}), (\ref{eq:dk_bound}), and (\ref{eq:effective_gap}), we have:
\begin{equation}\label{eq:final_bound}
\left\| \sin \Theta( \operatorname{span}( \hat{U} ), \operatorname{span}( U^* ) ) \right\|_F \leq \frac{ C \cdot ( \sigma_{\xi}^2 + \sigma_{\tilde{\xi}}^2 ) \sqrt{ \frac{ \log( d / \delta ) }{ n } } }{ \lambda_{\min}( \Sigma_s ) (1 - \rho) - \sigma_{\xi}^2 - \sigma_{\tilde{\xi}}^2 }.
\end{equation}

For sufficiently large $n$ and small noise levels such that $\lambda_{\min}( \Sigma_s ) (1 - \rho ) > \sigma_{\xi}^2 + \sigma_{\tilde{\xi}}^2$, the denominator is positive.

\textbf{Step 7: Simplify and Conclude}

Assuming $\sigma_{\xi}^2 + \sigma_{\tilde{\xi}}^2$ is small compared to $\lambda_{\min}( \Sigma_s ) (1 - \rho )$, we can approximate:
\begin{equation}
\left\| \sin \Theta( \operatorname{span}( \hat{U} ), \operatorname{span}( U^* ) ) \right\|_F \leq C' \cdot \frac{ \sqrt{ r ( \sigma_{\xi}^2 + \sigma_{\tilde{\xi}}^2 ) \log( d / \delta ) } }{ (1 - \rho ) \lambda_{\min}( \Sigma_s ) \sqrt{ n } }.
\end{equation}

This completes the proof.

\end{proof}

\begin{remark}
The factor $\frac{1}{1 - \rho}$ reflects the system's sensitivity to the correlation between the original and reference images. As $\rho \to 1$, indicating highly correlated images, the denominator approaches zero, and the bound grows large, showing that the system becomes more sensitive to irrelevant parts in $\tilde{x}$.
\end{remark}

\begin{remark}
This result shows a trade-off between the sample size $n$, the dimensionality $d$, the signal-to-noise ratio (through $\lambda_{\min}( \Sigma_s )$, $\sigma_{\xi}^2$, $\sigma_{\tilde{\xi}}^2$), and the correlation $\rho$ between $x$ and $\tilde{x}$. Increasing $n$ or the eigenvalue gap improves the bound, while higher noise levels or higher correlation (leading to larger $p = 1 - \rho^2$) degrade the performance.
\end{remark}

\begin{remark}
If we let $\tau = p = 1 - \rho^2$ represent the proportion of irrelevant information, as $\tau \to 1$, the bound grows as $O\left( \frac{1}{1 - \sqrt{1 - \tau}} \right)$, which can be approximated as $O\left( \frac{1}{1 - \rho} \right)$ for small $\tau$. This indicates a nonlinear degradation in feature learning efficiency, and the system maintains stability only when $\tau < \tau_c$ for some critical tolerance rate $\tau_c$.
\end{remark}

\begin{remark}
The above analysis assumes that the noise levels $\sigma_{\xi}^2$ and $\sigma_{\tilde{\xi}}^2$ are small compared to the signal strength $\lambda_{\min}( \Sigma_s )$. In practice, this means that the data should have a sufficiently strong signal component relative to noise for effective learning.
\end{remark}

\begin{remark}
The use of the Matrix Bernstein Inequality allows for tight probabilistic bounds on the sampling error, leveraging the sub-Gaussian nature of the noise. This is crucial for high-dimensional settings where $d$ is large.
\end{remark}








\section{Robustness Experiments}

To validate our theoretical analysis and assess the robustness of the proposed CLC method, we conducted experiments simulating perturbations in the conditional latent. Controlled errors were introduced during both training and inference stages to evaluate the method's resilience to imperfect feature matching.

Specifically, we define a perturbation level $\epsilon \in [0, 0.5]$, which represents the probability of random feature matching. For each feature in the conditional latent, the correct match is used with probability $1-\epsilon$, and a random match from the dictionary is used with probability $\epsilon$. This perturbation is applied consistently during both training and inference, allowing the model to adapt to the noise during training while simultaneously testing its robustness during inference.

To quantify the impact of these perturbations, we adopt the Performance Reduction (PR) metric as defined in \cite{huang2023learned}:

\begin{equation}
    \text{PR} = 1 - \frac{\text{performance improvement w/ perturbation}}{\text{performance improvement w/o perturbation}},
\end{equation}

where performance improvement is measured in terms of PSNR and MS-SSIM gains over the baseline model without conditional latent coding.

Figure \ref{fig:robustness} illustrates the PR of CLC under varying levels of perturbation for both PSNR and MS-SSIM metrics. The results indicate that CLC exhibits significant robustness at lower perturbation levels. For instance, at $\epsilon = 0.1$, the PR values are 3.7\% for PSNR and 4.5\% for MS-SSIM, demonstrating a minimal impact on performance. However, as $\epsilon$ increases, the PR values rise more sharply, with PSNR and MS-SSIM reaching 43.5\% and 47.8\%, respectively, at $\epsilon = 0.5$. This trend aligns with our theoretical predictions, where the performance degradation accelerates as perturbations exceed certain thresholds.
\begin{figure}[t]
    \centering
    \includegraphics[width=0.9\linewidth]{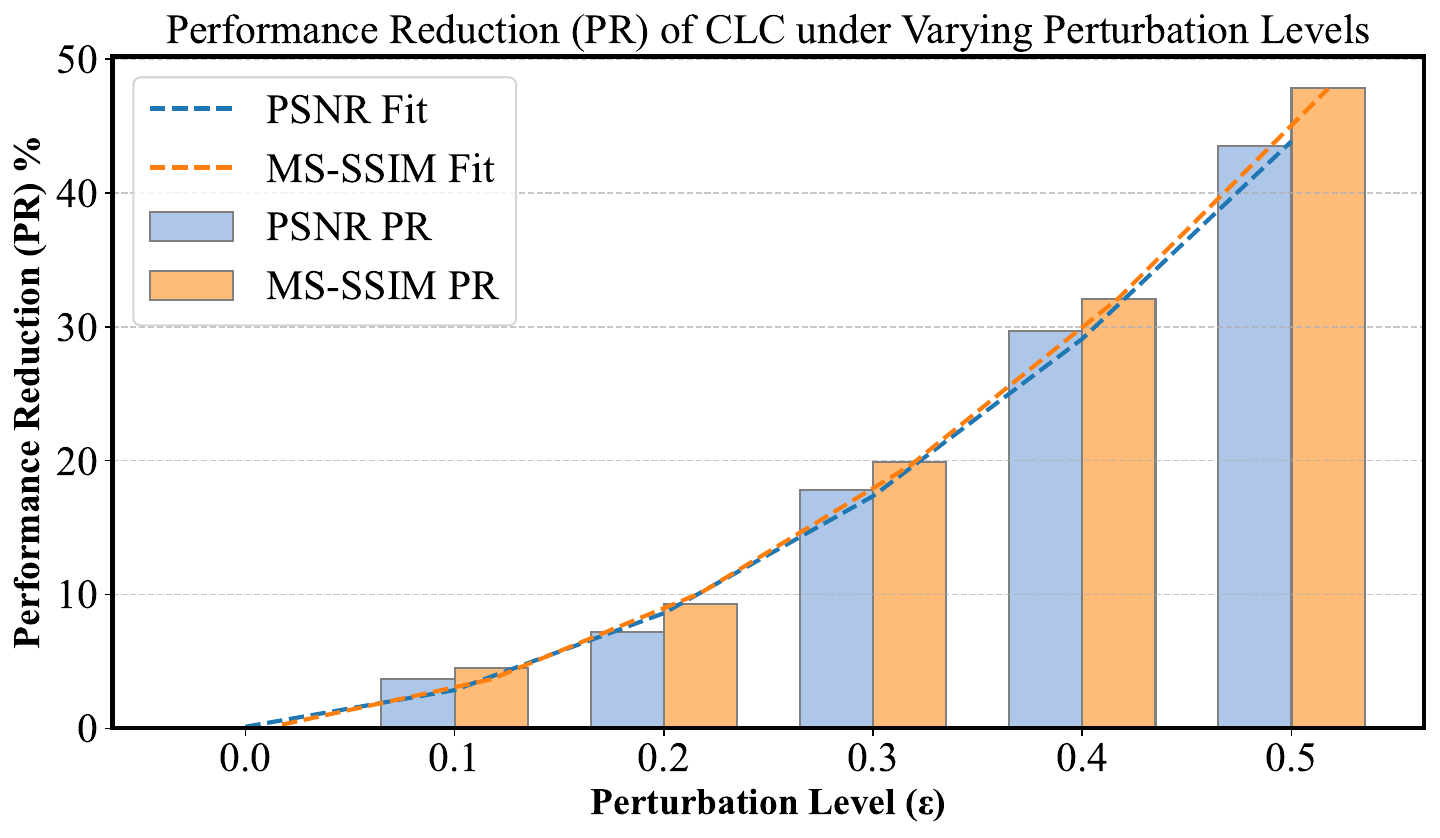}
    \caption{Performance Reduction (PR) of CLC under varying perturbation levels. Lower PR indicates higher robustness.}
    \label{fig:robustness}
\end{figure}
These results confirm that while CLC can tolerate moderate levels of feature mismatch, higher levels of perturbation lead to a substantial increase in performance reduction, highlighting the importance of accurate feature matching.

\section{Additional Visualization Results}

To provide a more comprehensive understanding of the performance of our proposed method, we present additional visualization results in this section. Figures \ref{fig:visual2} and \ref{fig:visual3} showcase the reconstructed images generated by our method under typical conditions.

In these figures, the regions highlighted within the red and blue boxes represent magnified areas of the images. The red boxes focus on key details such as texture and edge sharpness, while the blue boxes highlight other regions of interest. These zoomed-in areas allow for a closer inspection of the image quality, demonstrating how our method effectively preserves fine details and maintains high visual fidelity across different scenarios.

Overall, these visual results further confirm the effectiveness of our approach in producing high-quality reconstructions with detailed preservation of critical image features.

\begin{figure*}[t]
    \centering
    \begin{minipage}{\textwidth}
        \centering
        \includegraphics[width=\textwidth]{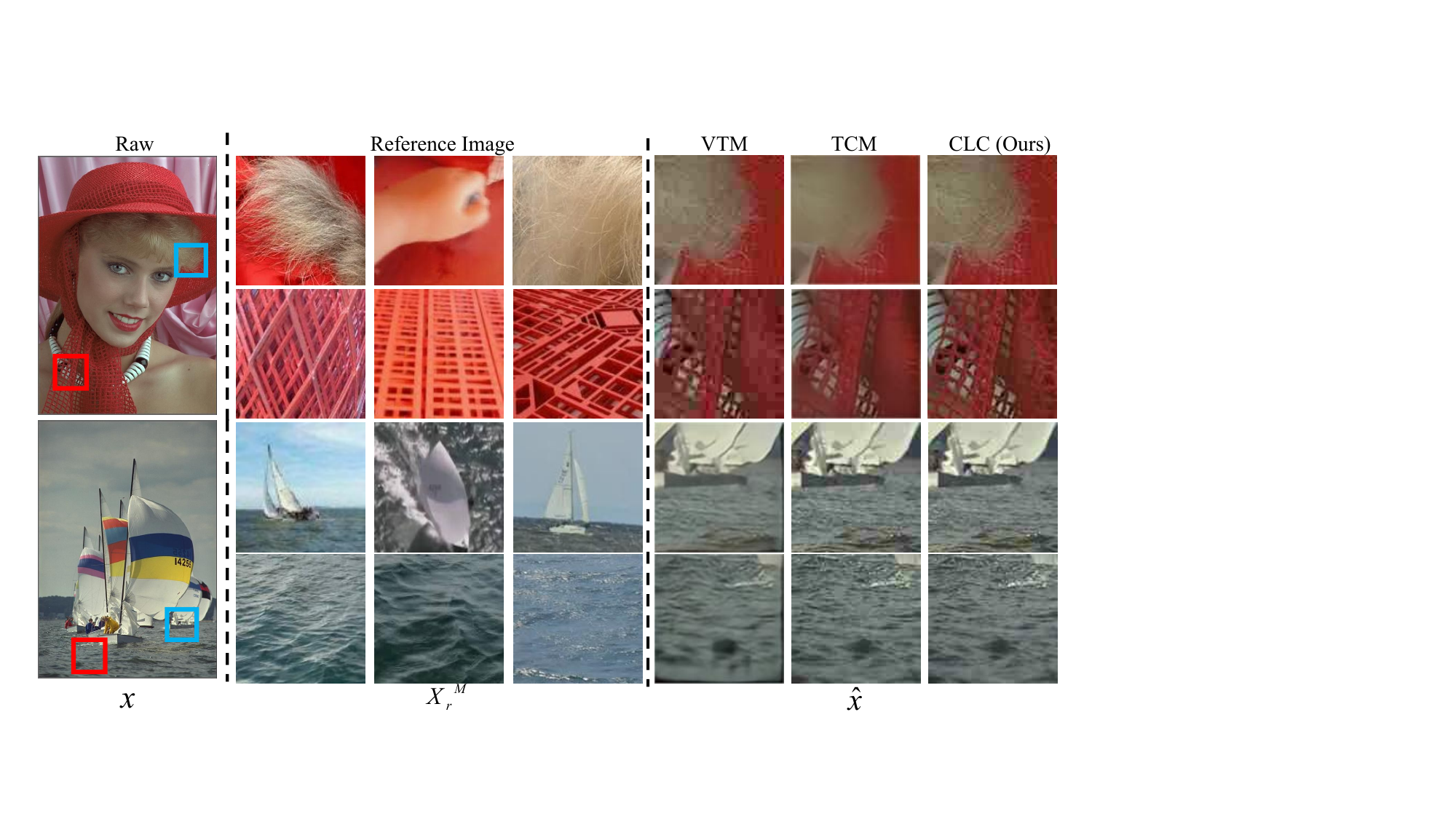}
        \caption{Visualization of reconstructed images using our method. The red and blue boxes highlight magnified areas for detailed inspection.}
        \label{fig:visual2}
    \end{minipage}
    
    \vspace{10pt} 
    
    \begin{minipage}{\textwidth}
        \centering
        \includegraphics[width=\textwidth]{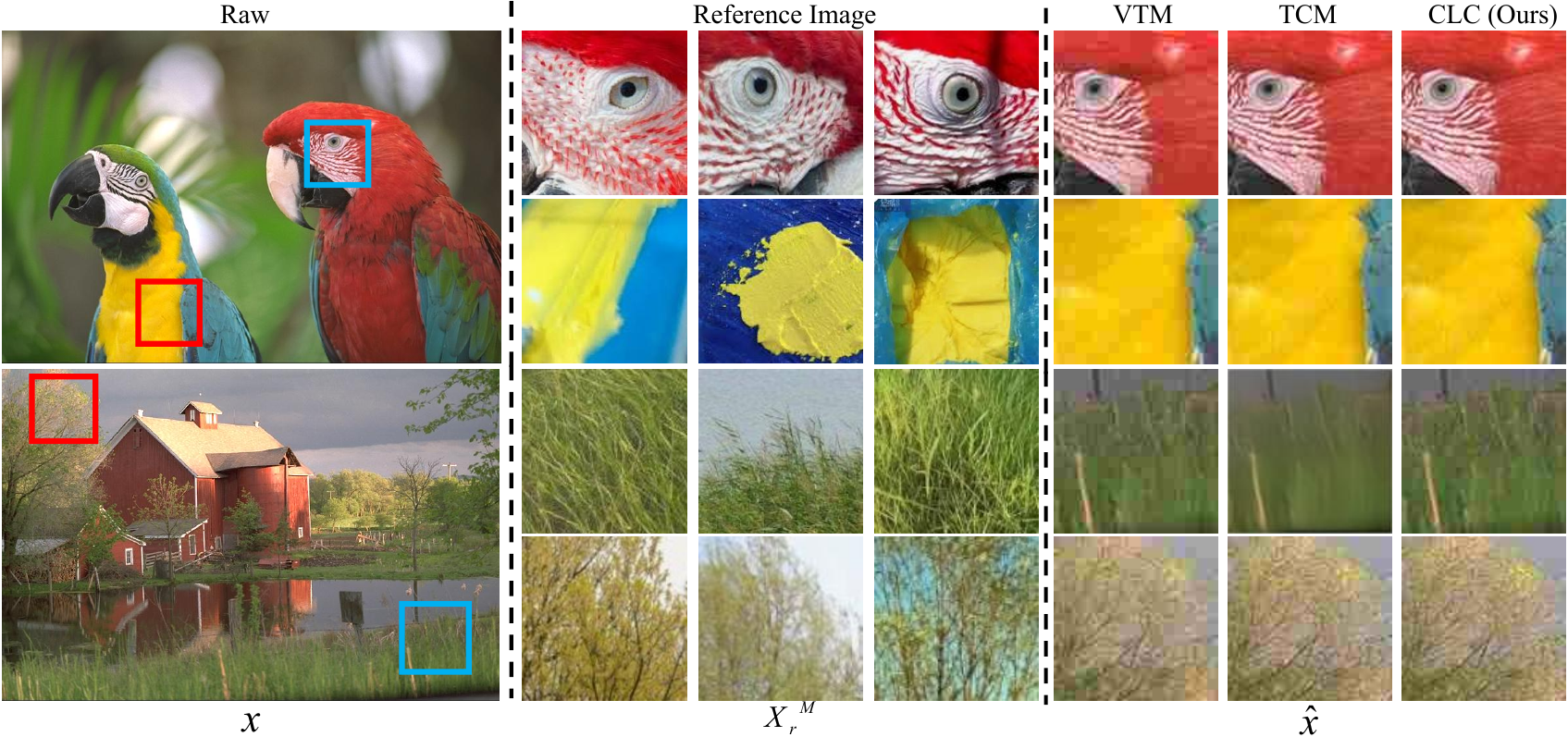}
        \caption{Visualization of reconstructed images using our method. The red and blue boxes highlight magnified areas for detailed inspection.}
        \label{fig:visual3}
    \end{minipage}
\end{figure*}

\section{Future Work}
While our current work has demonstrated the effectiveness of conditional latent coding (CLC) in deep image compression, its potential extends to broader vision tasks. The dynamic reference synthesis mechanism could be adapted for pose estimation \cite{shen2024imagpose,shenadvancing,li2024deviation,li2024translating}, where conditional feature alignment might enhance keypoint localization through self-supervised learning paradigms like \cite{chen2024learning}. The multi-scale dictionary construction and adaptive fusion strategies could further benefit ultra-high-resolution image segmentation \cite{sun2024ultrahighresolutionsegmentationboundaryenhanced,sunprogram,yin2024class} and remote sensing image enhancement \cite{ma2024logcanadaptivelocalglobalclassaware,10095835,tao2023dudb}, particularly when combined with token-based representation learning \cite{chen2024tokenunify}.

For medical imaging applications, our framework could integrate with 3D vision-language pretraining \cite{chen2023generative,liu2023t3d} and cross-dimension distillation \cite{liu2024cross} to handle multimodal data synthesis. The topological constraints in \cite{RAMMVC,scMFC} may synergize with our latent space modeling to improve structural coherence in electron microscopy segmentation \cite{chen2024learning} and CT text-image retrieval \cite{chen2024bimcv}. The error-bound analysis (Theorem 1) could also enhance medical image compression through knowledge distillation \cite{yang2024unicompress} while maintaining diagnostic fidelity.

In autonomous driving systems \cite{Zhang2023EHSSAE,zhang2024mapexpertonlinehdmap}, our method's robustness could be strengthened by unsupervised domain adaptation techniques \cite{deng2024unsupervised} to handle sensor noise. For multi-view estimation \cite{Yuan2024h,Yuan2024i,Yuan2024j}, the dictionary-based conditioning might unify cross-view correlations through reinforcement learning frameworks \cite{chen2023self}. However, as generative components may introduce noise \cite{qian2024maskfactory}, future work should explore quality evaluation metrics for synthesized latents and develop noise-robust training strategies like selective feature pruning \cite{chen2024tokenunify} or adversarial validation \cite{deng2024unsupervised}, ensuring reliability in downstream tasks while maintaining computational efficiency \cite{yang2024unicompress}.

\section{Pseudo-code for Encoding and Decoding}
To clearly illustrate the implementation of our proposed Conditional Latent Coding (CLC) method, we provide detailed pseudo-code. The pseudo-code covers the main steps for both encoding and decoding, including feature extraction, reference retrieval, conditional latent synthesis, and finally entropy coding and image reconstruction. The details of the encoding pseudo-code can be found in Algorithm \ref{clc_algorithm}, and the decoding pseudo-code is provided in Algorithm \ref{decodeclc}.

\begin{algorithm}[h]
\caption{Conditional Latent Coding (CLC)}
\label{clc_algorithm}
\SetKwInOut{Input}{Input}
\SetKwInOut{Output}{Output}

\Input{Image $x$, Feature dictionary $D$}
\Output{Compressed bitstream}

\BlankLine
\textbf{Function ConstructDictionary($R$):} \\
\For{each image $x_i$ in reference dataset $R$}{
    $v_i \gets \text{spp}(f_\theta(x_i))$ \\
    $\hat{v}_i \gets \text{PCA}(v_i)$
}
\texttt{Clusters:} $\{C_1, C_2, ..., C_K\} \gets \text{MiniBatchKMeans}(\{\hat{v}_i\})$ \\
Dictionary: $D \gets \{d_j = \text{argmin}_{\hat{v} \in C_j} \|\hat{v} - \mu_j\|_2\}_{j=1}^K$ \\
\Return $D$

\BlankLine
\textbf{Function ConditionalLatentCoding($x, D$):} \\
$y \gets g_a(x)$ \\
$X_r^M \gets \text{QueryDictionary}(D, f_\theta(x))$ \\
$Y_r^M \gets g_a(X_r^M)$

\BlankLine
\textbf{Conditional Latent Matching (CLM):} \\
$S_{ij} \gets \frac{\exp(\langle\phi(y_i), \phi(y_{r,j})\rangle/\tau)}{\sum_k \exp(\langle\phi(y_i), \phi(y_{r,k})\rangle/\tau)}$ \\
$y_m \gets F_m(y, Y_r^M; \theta_m)$ \\
$y_a \gets F_a(y, y_m; \theta_a)$

\BlankLine
\textbf{Conditional Latent Synthesis (CLS):} \\
$\alpha \gets \sigma(F_w([y, y_a]; \theta_f))$ \\
$\mu(y, y_a) \gets \alpha \odot y + (1 - \alpha) \odot y_a$ \\
$y_f \sim \mathcal{N}(\mu(y, y_a), \sigma^2(y, y_a))$

\BlankLine
\textbf{Entropy Coding:} \\
$z \gets h_a(y_f)$ \\
$\hat{z} \gets Q(z)$ \\
\For{$i \gets 1$ \KwTo $K$}{
    $p(y_f^i | y_f^{<i}, \hat{z}) \sim \mathcal{N}(\mu_i, \sigma_i^2)$ \\
    $r_i \gets y_f^i - \hat{y}_f^i$
}

\Return EncodedBitstream
\end{algorithm}

\begin{algorithm}[h]
\caption{Decoding with Conditional Latent Coding (CLC)}
\label{decodeclc}
\SetKwInOut{Input}{Input}
\SetKwInOut{Output}{Output}

\Input{Encoded bitstream $b$}
\Input{Feature reference dictionary $D = \{d_1, d_2, \dots, d_K\}$}
\Output{Reconstructed image $\hat{x}$}

\BlankLine
\textbf{Step 1: Extract and Decode Hyperprior} \\
Decode hyperprior $z$ from $b$: $z \gets \text{Decode}(b)$ \\
Use $z$ to estimate the initial latent representation $\hat{y}_f$: $\hat{y}_f \gets h_a^{-1}(z)$

\BlankLine
\textbf{Step 2: Retrieve Reference Features} \\
Extract features from $\hat{y}_f$ to query the dictionary $D$ \\
Retrieve top $M$ matching features $Y_r^M = \{\hat{y}_r^1, \hat{y}_r^2, \dots, \hat{y}_r^M\}$

\BlankLine
\textbf{Step 3: Conditional Latent Synthesis} \\
\For{$m \gets 1$ \KwTo $M$}{
    Perform feature matching and alignment: \\
    $\hat{y}_a^m \gets \text{Align}(\hat{y}_f, \hat{y}_r^m)$
}
Fuse aligned features to obtain final latent $\hat{y}$: \\
$\hat{y} \gets \sum_{m=1}^M \alpha_m \cdot \hat{y}_a^m$ \\
where $\alpha_m$ are dynamically computed fusion weights

\BlankLine
\textbf{Step 4: Entropy Decoding and Reconstruction} \\
Entropy decode each slice of $\hat{y}$ using $z$: \\
\For{$i \gets 1$ \KwTo $K$}{
    Decode slice $\hat{y}_i$ from $b$ using context: \\
    $\hat{y}_i \gets \text{EntropyDecode}(b, \hat{y}_{<i}, z)$
}

\BlankLine
\textbf{Step 5: Image Reconstruction} \\
Reconstruct the final image $\hat{x}$ from $\hat{y}$ using synthesis transform $g_s$: \\
$\hat{x} \gets g_s(\hat{y})$

\Return $\hat{x}$
\end{algorithm}

\section{Social Impact}

The proposed Conditional Latent Coding (CLC) framework presents significant implications for the field of deep image compression, particularly in terms of its potential for broader societal applications. By leveraging a fixed, pre-constructed feature dictionary, the CLC method enables end-to-end efficient compression without the need for complex or resource-intensive processing during runtime. This approach not only improves compression efficiency but also reduces the computational load, making it highly suitable for deployment in resource-constrained environments such as mobile devices, IoT systems, and edge computing. The ability to achieve high-quality compression with minimal overhead could lead to more widespread adoption of advanced image compression techniques, improving the accessibility and efficiency of digital communications and storage across diverse sectors.


\end{document}